\title{Unified View of Computational Problems Under Top-Quality Planning}
\title{Unifying and Certifying Top-Quality Planning}
\author{
    Michael Katz, Junkyu Lee, Shirin Sohrabi
}
\newcommand{\commentout}[1]{}
\def\defemb#1#2{\expandafter\def\csname #1\endcsname
                              {\relax\ifmmode #2\else\hbox{$#2$}\fi}}
 \newenvironment{proof}{\noindent {\bf Proof:}}%
{\hfill $\square$ \par \addvspace{\bigskipamount}}
\newtheorem{theorem}{Theorem}
\newtheorem{definition}{Definition}
\newcommand{\sas}{\textsc{sas}${}^{+}$}
\newcommand{\kstar}{\ensuremath{K^{\ast}}}
\newcommand{\vars}{\ensuremath{\mathcal V}}
\newcommand{\ops}{\ensuremath{\mathcal O}}
\newcommand{\op}{o}
\newcommand{\goal}{\ensuremath{s_\star}}
\newcommand{\eff}{\ensuremath{\mathit{eff}}}
\newcommand{\pre}{\ensuremath{\mathit{pre}}}
\newcommand{\prv}{\ensuremath{\mathit{prv}}}
\newcommand{\extracond}{\ensuremath{\mathit{cond}}}
\newcommand{\variables}[1]{\vars(#1)}
\newcommand{\state}{\ensuremath{s}}
\newcommand{\states}{\ensuremath{\mathcal S}}
\newcommand{\tuple}[1]{\langle #1 \rangle}
\newcommand{\fdrpair}[2]{\tuple{#1,\!#2}}
\newcommand{\leftq}{\llbracket}
\newcommand{\rightq}{\rrbracket}
\newcommand{\ptask}{\Pi}
\newcommand{\ptaskPar}[1]{\ptask^{#1} = \langle \vars^{#1},\ops^{#1},\initstate^{#1},\goal^{#1}, \costfunc^{#1}\rangle}
\newcommand{\ptaskParSTD}{\ptaskPar{}}
\newcommand{\initstate}{\state_{0}}
\newcommand{\costfunc}{cost}
\newcommand{\allplans}{\cP_{\ptask}}
\newcommand{\alllooplessplans}{\cP^{\ell\ell}_{\ptask}}
\newcommand{\domain}{\cD}
\newcommand{\var}{v}
\newcommand{\applied}[1]{\leftq #1 \rightq}
\newcommand{\plan}{\pi}
\newcommand{\ucausalG}[1]{\mbox{\sl G}_{CG}}
\def\reals{{\mathbb R}}
\newcommand{\nnreals}{\reals^{0+}}
\newcommand{\extrav}{\overline{\var}}
\newcommand{\inlinecite}[1]{\citeauthor{#1}~\shortcite{#1}}
\newcommand{\multiset}[1]{\textsc{MS}(#1)}
\newcommand{\domrel}{R}
\newcommand{\unorderedrel}{\domrel_{U}}
\newcommand{\subsetrel}{\domrel_{\subset}}
\newcommand{\looplessrel}{\domrel_{{\ell\ell}}}
\newcommand{\looplesstask}[1]{\ptask^{{\ell\ell}}_{#1}}
\newcommand{\planopdiscarded}[1]{\overline{\overline{#1}}}
\newcommand{\planopdiscard}[1]{\overline{#1}}
\newcommand{\planopfollow}[1]{#1^f}
\newcommand{\planopdiscarde}[1]{#1^e}
\newcommand{\planopdiscardede}[1]{{\overline{#1}}^e}
\begin{document}

\maketitle

\begin{abstract}
The growing utilization of planning tools in practical scenarios has sparked an interest in generating multiple high-quality plans. Consequently, a range of computational problems under the general umbrella of top-quality planning were introduced over a short time period, each with its own definition. 
In this work, we show that the existing definitions can be unified into one, based on a dominance relation. The different computational problems, therefore, simply correspond to different dominance relations. 
Given the unified definition, we can now certify the top-quality of the solutions, leveraging existing certification of unsolvability and optimality.  We show that task transformations found in the existing literature can be employed for the efficient certification of various top-quality planning problems and propose a novel transformation to efficiently certify loopless top-quality planning. 
\end{abstract}

\section{Introduction}
A body of research on generating plans of top quality 
\cite{katz-et-al-icaps2018,speck-et-al-aaai2020,lee-et-al-socs2023}
has recently yielded various computational problems, including unordered top-quality planning \cite{katz-et-al-aaai2020}, subset top-quality planning \cite{katz-sohrabi-icaps2022}, and loopless top-k planning \cite{vontschammer-et-al-icaps2022}.
Each of these problems is defined in an ad-hoc manner, addressing specific issues, often identified by an application of interest. For instance, unordered top-quality planning addresses the equivalence of plans that perform the same actions in a different order, a property that holds in many applications. Subset top-quality planning handles unnecessary actions on plans, while the loopless setting specifically addresses unnecessary loops on plans.
Clearly, these problems cover only a fraction of possible cases, and there may be various yet unspecified computational problems of practical use.
As it is impractical to specify each such computational problem separately, we propose to unify the existing definitions under the framework we call \textbf{dominance top-quality planning}. 
Essentially, any relation over plans defines a computational problem in top-quality planning and the definition specifies whether a set of plans constitutes a solution to the problem. The natural next step is to check whether a set of plans is a solution to the problem. We follow the existing work on certificates of unsolvability \cite{eriksson-et-al-icaps2017} and of optimality  
\cite{mugdan-et-al-icaps2023} and propose certifying top-quality planning solutions. We show how to certify top-quality for the unified definition, exploiting existing tools that can certify optimality \cite{mugdan-et-al-icaps2023}. Further, for the computational problems of unordered top-quality planning and subset top-quality planning, the existing in the literature task transformations \cite{katz-et-al-aaai2020,katz-sohrabi-icaps2022} can be used for efficient certification. For loopless top-quality planning no such transformation exists. We close the gap by introducing a new transformation that allows us to efficiently certify solutions for loopless top-quality planning. 

\section{Background}
A \emph{planning task} $\ptaskParSTD$ in \sas\ formalism \cite{backstrom-nebel-compint1995}
consists of  a finite set of finite-domain \emph{state variables} $\vars$,
a finite set of \emph{actions} $\ops$,
an \emph{initial state} $\initstate$, and
the \emph{goal} $\goal$.
Each variable $\var\in\vars$ is associated with a finite domain $\domain(\var)$ of values.
An assignment of 
$d\in\domain(v)$ to 
$v\in\vars$ is
denoted by 
$\fdrpair{v}{d}$.
A \emph{partial assignment} $p$ maps a subset of variables $\variables{p} \subseteq \vars$ to values in their domains.
For a variable $\var\in\vars$ and a partial assignment $p$,
the value of $\var$ in $p$ is denoted by $p[\var]$ if $\var \in \variables{p}$ and
is \emph{undefined} otherwise.
A full assignment $\state$
is called a \emph{state}, and the set of all states is denoted by $\states$.
State $\state$ is \emph{consistent} with a partial assignment $p$
if they agree on all variables in $\variables{p}$, denoted by $p \subseteq \state$.
Each action $\op$ in $\ops$ is a pair $\tuple{\pre(\op),\eff(\op)}$,
where $\pre(\op)$ and  $\eff(\op)$ are partial assignments called \emph{precondition} and \emph{effect}, respectively. We denote by $\prv(\op)$ the precondition restricted to the variables not affected by the action.
Furthermore, $\op$ has an associated non-negative cost denoted by $\costfunc(\op)\in\nnreals$.
An action $\op$ is applicable in state $\state$ if $\pre(\op) \subseteq \state$.
Applying $\op$ in $\state$ results in a state denoted by $\state\applied{\op}$,
where $\state\applied{\op}[\var] = \eff(\op)[\var]$ for all $\var \in \variables{\eff}$, and
$\state\applied{\op}[\var] = \state[\var]$ for all other variables.
An action sequence $\plan = \tuple{\op_1 \ldots \op_n}$ is applicable in state $\state$
if there are states $s_1, \ldots, s_{n+1}$ such that $s=s_1$, $\op_i$ is applicable in
$\state_{i}$ and $\state_{i}\applied{\op_i} = \state_{i+1}$ for $0 \leq i \leq n$.
We denote $\state_n$ by $\state\applied{\plan}$.
An action sequence with $\goal \subseteq \initstate\applied{\plan}$ is called a \emph{plan}.
The cost of a plan $\plan$, denoted by $\costfunc(\plan)$ is the sum of the costs of the actions in the plan.
The set of all plans is denoted by $\allplans$,
and an \emph{optimal} plan is a plan in $\allplans$ with the minimum cost.
Next, we present the {\bf top-k} planning problem, as defined by \inlinecite{sohrabi-et-al-ecai2016} and \inlinecite{katz-et-al-icaps2018}.
Given a classical planning task $\ptask$ and a natural number $k$,
find a set of plans $P\subseteq\allplans$ that satisfy the following properties:
(I)  For all plans $\plan\in P$,
if there exists a plan $\plan'\in\allplans$ such that
$\costfunc(\plan') < \costfunc(\plan)$, then $\plan'\in P$,
and (II) $|P| \leq k$, and if $|P| < k$, then $P=\allplans$.
Note that cost-optimal planning is a special case of top-$k$ planning for $k=1$.
Extending cost-optimal planning, {\bf top-quality} planning \cite{katz-et-al-aaai2020} deals with finding
{\em all} plans of up to a specified cost. 
Given a planning task $\ptask$ and a number $q\in\nnreals$,
find the set of plans $P\!=\!\{ \plan\in\allplans \mid \costfunc(\plan) \leq q\}$.
In some cases, an equivalence between plans can be specified,
allowing to possibly skip some plans, if equivalent plans are found. The corresponding problem is called
{\bf quotient top-quality} planning and it is formally specified as follows.
Given a planning task $\ptask$, an equivalence relation $N$ over its set of plans $\allplans$,
and a number $q\in\nnreals$, find a set of plans $P\subseteq\allplans$ such that
$\bigcup_{\plan\in P} N[\plan]$ is the solution to the top-quality planning problem.
The most common case of such an equivalence relation is when the order of actions
in a valid plan is not significant from the application perspective. In other words,
when you can reorder some of the actions in a plan and still get a valid plan.
The corresponding problem is called {\bf unordered top-quality} planning and is formally specified as follows.
Given a planning task $\ptask$ and a number $q\in\nnreals$, find a set of plans
$P\subseteq\allplans$ such that $P$ is a solution to the quotient top-quality
planning problem under the equivalence relation
$\unorderedrel
 = \{ (\plan, \plan') \mid \plan,\plan'\in\allplans, \multiset{\plan} = \multiset{\plan'}\}$, where $\multiset{\plan}$ is the multi-set of the actions in $\plan$.
Going beyond equivalence relations, let $\subsetrel= \{ (\plan,\plan')\mid \multiset{\plan}\subset\multiset{\plan'}\}$ denote the relation defined by the subset operation over plan action multi-sets.
The {\bf subset top-quality} planning problem \cite{katz-sohrabi-icaps2022} is defined as follows.
Given a planning task $\ptask$ and a natural number $q$, find a set of plans $P\!\subseteq\!\allplans$ s.t. (i) $\forall\plan\!\in\!P$, $\costfunc(\plan)\!\leq\!q$, and (ii) $\forall\plan'\!\in\!\allplans\!\setminus\!P$ with $\costfunc(\plan')\!\leq\!q$, $\exists\plan\!\in\!P$ s.t. $(\plan,\plan')\!\in\!\subsetrel$.
In words, a plan $\pi$ with $\costfunc(\pi)\leq q$ may be excluded from the solution to the subset top-quality planning problem only if its subset is part of the solution. Note, while a top-quality and unordered top-quality solutions are also subset top-quality solutions, we are interested in finding the smallest (in the number of plans) such solutions. While unordered top-quality solutions can be of infinite size, the smallest subset top-quality planning solutions are always finite.
Yet another problem is the {\bf loopless top-k} planning problem \cite{vontschammer-et-al-icaps2022}.
It is defined in the literature similarly to the top-k planning problem, where the set of all plans $\allplans$ is replaced with the set of all loopless plans $\alllooplessplans$. 
Here, we consider the corresponding {\bf loopless top-quality} planning problem. 
We therefore define the top-quality variant as follows.
Given a planning task $\ptask$ and a number $q\in\nnreals$,
find the set of plans $P\!=\!\{ \plan\in\alllooplessplans \mid \costfunc(\plan) \leq q\}$.

\section{Unifying Top-quality Planning}
We start by noticing that the definition for quotient top-quality planning can be rewritten similarly to the one for subset top-quality planning, by requiring the plans of bounded by $q$ costs missing from the solution to be in relation to the ones that are in the solution. With that, given any relation $\domrel$, we extend the Definition 1 of \inlinecite{katz-sohrabi-icaps2022} as follows.
\begin{definition}
    \label{def:problem}
    Let $\ptask$ be some planning task, $\allplans$ be the set of its plans, and $\domrel$ be some relation over $\allplans$.
    The {\bf dominance top-quality planning} problem is defined as follows. 
    Given a natural number $q$, find a set of plans $P \subseteq \allplans$ such that 
    \begin{enumerate}
        \item \label{en:cost} $\forall\plan \in P$, $\costfunc(\plan) \leq q$, 
        \item \label{en:representative} $\forall\plan'\!\not\in\!P$ with $\costfunc(\plan')\!\leq\!q$, $\exists\plan\!\in\!P$ such that $(\plan,\plan')\!\in\!\domrel$, 
        \item \label{en:minimal} $P$ is minimal under $\subseteq$ among all $P'\subseteq \allplans$ for which conditions \ref{en:cost} and \ref{en:representative} hold.
    \end{enumerate}
        
\end{definition}

Observe that for an empty relation we get the top-quality planning problem, for
$\unorderedrel$
we get the unordered top-quality planning \cite{katz-et-al-aaai2020}, while for 
$\subsetrel$
we get sub-multi-set top-quality planning \cite{katz-sohrabi-icaps2022}. 
It is worth noting that in these publications, the third condition was not part of the definition, but the aim was to find solutions of minimal size. 

Moving on to loopless top-quality planning, while the existing definition is not expressed via a relation over plans, there can be various such relations. While varying in detail, the idea is the same -- a plan without loops dominates plans with loops. Concrete relations may require that the loopless plan could be obtained from the one with loops by removing these loops. A relation we adopt in this work, however, is somewhat more general: $(\pi,\pi')\in \looplessrel$ if and only if (a) $\pi$ is a loopless plan and (b) if $S'$ are the states traversed by $\pi$, then $\pi'$ traverses some $s\in S'$ more than once. 

\begin{theorem}
    The dominance top-quality planning problem for $\looplessrel$ is the loopless top-quality planning problem.
\end{theorem}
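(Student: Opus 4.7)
The plan is to show that $P^* := \{\pi \in \alllooplessplans \mid \costfunc(\pi) \leq q\}$, which is the unique solution of the loopless top-quality planning problem, is also the unique $P \subseteq \allplans$ satisfying the three conditions of Definition~\ref{def:problem} when $\domrel = \looplessrel$.

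Conditions \ref{en:cost} and \ref{en:minimal} for $P^*$, together with one direction of uniqueness, follow from the same short contradiction: if a loopless $\pi^* \in P^*$ were absent from any candidate $P' \subseteq \allplans$ satisfying conditions \ref{en:cost} and \ref{en:representative}, then condition \ref{en:representative} on $P'$ would demand a dominator $\pi$ with $(\pi,\pi^*)\in\looplessrel$, forcing $\pi^*$ to revisit a state of $\pi$ and contradicting looplessness of $\pi^*$. This rules out $P' \subsetneq P^*$ and forces $P^* \subseteq P$ for every solution $P$. The main work is condition \ref{en:representative} for $P^*$: given $\pi'\notin P^*$ with $\costfunc(\pi')\leq q$, I would construct a witness $\pi\in P^*$ with $(\pi,\pi')\in\looplessrel$ by iterated loop cutting. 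Whenever the current plan visits the same state at two positions $i<j$, delete the actions between them; this preserves applicability at $\initstate$ and reaching the goal since $s_i=s_j$, does not increase cost (costs are non-negative), and strictly shortens the sequence, so the process terminates in a loopless plan $\pi$ of cost $\leq q$, hence $\pi\in P^*$. For the relation itself, I would point to the \emph{last} cut performed: it removed a loop at some state $s$ in an intermediate loop-cut plan whose state sequence is a sub-sequence of $\pi'$'s state sequence, so the two equal occurrences of $s$ in that intermediate plan correspond to two distinct positions of $\pi'$ both equal to $s$, showing that $\pi'$ revisits $s$; and since the last cut retains the earlier occurrence of $s$ and no further cuts follow, $s$ lies in the state set of $\pi$, giving $(\pi,\pi')\in\looplessrel$.

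The remaining direction of uniqueness rules out any looped plan in a solution $P$: if some looped $\pi\in P$ existed, then $(\pi,\cdot)\notin\looplessrel$ for every plan, so $\pi$ dominates nothing under $\looplessrel$ and removing it cannot break condition \ref{en:representative} for plans outside $P$; $\pi$ itself is covered by its loop-cut witness, which lies in $P^*\subseteq P\setminus\{\pi\}$. This contradicts minimality of $P$, so $P=P^*$. I expect the main obstacle to be that final step of the loop-cutting construction, namely verifying that the resulting $\pi$ still contains a state at which $\pi'$ loops; the observation about the last cut localizes this, after which the rest is straightforward bookkeeping with Definition~\ref{def:problem} and the asymmetric shape of $\looplessrel$.
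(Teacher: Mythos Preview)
Your proof is correct and follows the same approach as the paper's: loop cutting to witness domination of every looped plan within the cost bound, and a contradiction from the definition of $\looplessrel$ to force inclusion of every loopless plan within the bound. You are more careful than the paper in two places it glosses over---the explicit verification that $(\pi,\pi')\in\looplessrel$ holds for the loop-cut witness (your ``last cut'' observation, which the paper asserts without argument), and the check that removing a looped plan from a candidate solution preserves condition~\ref{en:representative} because looped plans dominate nothing under $\looplessrel$.
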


\begin{proof}
    Recall that $(\plan,\plan')\in \looplessrel$ if and only if (a) $\plan$ is a loopless plan and (b) if $S'$ are the states traversed by $\plan$, then $\plan'$ traverses some $s\in S'$ more than once. 
    Let $P$ be a solution to the dominance top-quality planning problem for $\looplessrel$. 
    Let $\plan'\in\allplans\setminus\alllooplessplans$ be a plan with a loop such that $\costfunc(\plan') \leq q$ and let $\plan$ be some plan such that $\costfunc(\plan) \leq q$ and $(\plan,\plan')\in\looplessrel$.
    Such a plan $\plan$ always exists and can be obtained from $\plan'$ by removing loops.
    Condition \ref{en:representative} of Definition \ref{def:problem} allows $\plan'$ not to be in $P$ and therefore condition \ref{en:minimal} will ensure that $\plan'\not\in P$. Together with condition \ref{en:cost} this gives us that $P$ does not include plans with loops. 
    
    For a plan $\plan'\in\alllooplessplans$ with $\costfunc(\plan') \leq q$, assume to the contrary that $\plan'\not\in P$. Then, from condition \ref{en:representative} we have that there exists $\plan\in P$ such that $(\plan,\plan') \in \looplessrel$. That implies that there exists a state $s$ that the plan $\plan'$ traverses more than once, contradicting $\plan'\in\alllooplessplans$.
\end{proof}

While none of the common properties of relations (reflexivity, transitivity, and symmetry) are required to make Definition \ref{def:problem} well defined, it is worth discussing the relations we considered so far. 
The quotient top-quality planning problem in general and unordered top-quality in particular deal with equivalence relations. 
On the other hand, 
the relations $\subsetrel$ and $\looplessrel$ are transitive and anti-symmetric,
but they are not reflexive and therefore not a partial order. 
In principle extending these relations by adding the elements $(\pi,\pi)$ for all $\pi\in\allplans$ would not affect what is considered to be a solution 
under the definition.
In general, if $(\pi, \pi')\in \domrel$ and $(\pi', \pi'')\in \domrel$, if both  $\pi'$ and $\pi''$ are not in $P$, while $\pi$ is, transitivity would suffice to ensure condition \ref{en:representative}. It is not strictly necessary though. An example is $\domrel = \{ (\pi_a,\pi_b), (\pi_c,\pi_d), (\pi_b,\pi_d), (\pi_d,\pi_b) \}$ over the set of plans $\allplans=\{ \pi_a, \pi_b, \pi_c, \pi_d \}$, all of the same cost $q$. The set $P=\{ \pi_a, \pi_c \}$ is a solution to the $\domrel$ dominance top-quality problem according to Definition  \ref{def:problem}. Note that the example relation is neither symmetric nor anti-symmetric.

\section{Certificate of Top Quality}
Given a planning task $\ptask$ and a set of plans $P\subseteq\allplans$, note that it is possible to obtain another (transformed) planning task $\ptask_{P}$ with the set of plans being $\allplans\setminus P$ \cite{katz-et-al-icaps2018}. The family of such transformations is generally called {\em plan forbidding}, and several such transformations exist in the literature, forbidding plans and their reorderings \cite{katz-sohrabi-aaai2020,katz-et-al-aaai2020} or plans and their super-multi-sets \cite{katz-sohrabi-icaps2022}.

We start with certifying the top-quality planning problem.
\begin{theorem}
A set of plans $P$ can be 
checked for being a solution
to the top-quality planning problem.
\end{theorem}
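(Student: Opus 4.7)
The plan is to reduce the certification to two independent sub-checks: a direct verification that everything in $P$ belongs to the target solution set, and a certificate that nothing outside $P$ does. Recall that a solution to the top-quality planning problem for $\ptask$ and $q$ is the set $\{\plan \in \allplans \mid \costfunc(\plan) \leq q\}$, so $P$ is a solution if and only if both inclusions hold.

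First, I would verify each candidate $\plan \in P$ by simulating it from $\initstate$: this confirms both that $\plan \in \allplans$ and that $\costfunc(\plan) \leq q$. That handles the inclusion $P \subseteq \{\plan \in \allplans \mid \costfunc(\plan) \leq q\}$ outright, without recourse to any external certificate machinery.

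Second, I would certify the reverse inclusion, that no plan outside $P$ has cost at most $q$. The idea is to apply a plan-forbidding transformation of the sort cited just above the theorem statement \cite{katz-et-al-icaps2018} to produce a task $\ptask_P$ whose plan set is exactly $\allplans \setminus P$. Certifying that $\ptask_P$ has no plan of cost $\leq q$ can then be done by invoking an existing optimality certificate for $\ptask_P$ \cite{mugdan-et-al-icaps2023}: the witness is an optimal plan of $\ptask_P$ together with its optimality certificate, from which one reads off that the optimum strictly exceeds $q$. In the degenerate case that $\ptask_P$ admits no plan at all, an unsolvability certificate in the sense of \cite{eriksson-et-al-icaps2017} suffices.

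The main obstacle I anticipate is the second step: it hinges on the plan-forbidding transformation preserving the plan set exactly (which we inherit from the cited literature) and on the optimality certificate being able to witness a lower bound on the cost of every plan of $\ptask_P$, which is precisely what the certificate's correctness guarantee provides. Note that this reduction makes no use of the structure of the top-quality problem beyond the definition itself, and so sets up the natural template that the paper presumably specializes to the unordered, subset, and loopless variants using correspondingly tighter plan-forbidding transformations.
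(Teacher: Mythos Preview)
Your proposal is correct and matches the paper's proof essentially line for line: both split the check into (a) verifying that every $\plan\in P$ has $\costfunc(\plan)\leq q$ and (b) certifying via the plan-forbidding transformation $\ptask_P$ that no plan outside $P$ has cost at most $q$, the latter reduced to an optimality certificate for $\ptask_P$. Your version is slightly more explicit about the simulation step and the unsolvable degenerate case, but the approach is identical.
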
 
\begin{proof}
$P$ is a solution to the top-quality planning problem if and only if (a) $\forall \pi\in P$, $\costfunc(\pi) \leq q$, and (b) $\ptask_{P}$ has no plans of cost smaller or equal to $q$. That means that we can certify top-quality planning solutions for $\ptask$ by certifying optimality for the transformed task $\ptask_{P}$ \cite{mugdan-et-al-icaps2023}. 
\end{proof}

Moving now to dominance top-quality planning, 
for a set of plans $P$, we define an {\em extended under} $\domrel$ set of plans $$\overline{P} := P \cup \bigcup_{\pi \in P} \{ \pi' \in \allplans \mid \costfunc(\pi') \leq q, (\pi,\pi') \in \domrel \}.$$ 

\begin{theorem}
    \label{th:solution}
If $P$ is a solution to the dominance top-quality planning problem, then $\overline{P}$ is a solution to the top-quality planning problem.   
\end{theorem}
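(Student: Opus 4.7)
The plan is to show that $\overline{P}$ equals exactly the set $P^{\star} := \{\pi \in \allplans \mid \costfunc(\pi) \leq q\}$, which is by definition the unique solution to the top-quality planning problem. I would prove this by double inclusion, leaning directly on conditions \ref{en:cost} and \ref{en:representative} of Definition \ref{def:problem}.

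For the inclusion $\overline{P} \subseteq P^{\star}$, I would take an arbitrary $\pi \in \overline{P}$ and split into two cases. If $\pi \in P$, then condition \ref{en:cost} gives $\costfunc(\pi) \leq q$, so $\pi \in P^{\star}$. Otherwise $\pi$ was added via the union, which explicitly filters by $\costfunc(\pi) \leq q$, so again $\pi \in P^{\star}$. This direction is essentially by construction of $\overline{P}$.

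For the reverse inclusion $P^{\star} \subseteq \overline{P}$, I would pick an arbitrary $\pi' \in \allplans$ with $\costfunc(\pi') \leq q$. If $\pi' \in P$, we are done since $P \subseteq \overline{P}$. Otherwise $\pi' \notin P$ and $\costfunc(\pi') \leq q$, so condition \ref{en:representative} applies and yields some $\pi \in P$ with $(\pi,\pi') \in \domrel$. Then $\pi'$ satisfies the membership condition of the set $\{\pi'' \in \allplans \mid \costfunc(\pi'') \leq q,\, (\pi,\pi'') \in \domrel\}$ appearing in the definition of $\overline{P}$, so $\pi' \in \overline{P}$.

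There is no real obstacle here: the statement is essentially an unpacking of the dominance definition, with condition \ref{en:cost} handling the cost bound and condition \ref{en:representative} guaranteeing that every bounded-cost plan missing from $P$ is captured by some dominator in $P$ and hence pulled into $\overline{P}$. Note that condition \ref{en:minimal} is not needed for this direction, which aligns with the intent: certification only requires that $P$ be a valid dominance top-quality solution, not necessarily an inclusion-minimal one.
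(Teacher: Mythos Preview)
Your proof is correct and follows essentially the same route as the paper: the paper proves (i) $\costfunc(\pi)\leq q$ for all $\pi\in\overline{P}$ and (ii) $\costfunc(\pi')>q$ for all $\pi'\notin\overline{P}$, which is exactly your double inclusion $\overline{P}=P^{\star}$ with (ii) phrased as a contrapositive and argued by contradiction. Both proofs rely only on conditions \ref{en:cost} and \ref{en:representative} and do not invoke minimality, as you observed.
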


\begin{proof}
    To show that $\overline{P}$ is a solution to the top-quality planning problem, we need to show that (i) $\forall\plan \in \overline{P}$, we have $\costfunc(\plan) \leq q$, and (ii) 
    $\forall\plan\in\allplans\setminus \overline{P}$, we have $\costfunc(\plan) > q$.
    
    For (i), since $\overline{P} := P \cup \bigcup_{\plan \in P} \{ \plan' \in \allplans \mid \costfunc(\plan') \leq q, (\plan,\plan') \in \domrel \}$, 
    we have the condition holds due to the first condition of Definition \ref{def:problem} for $P$.
    
    For (ii), if $\plan'\in\allplans\setminus \overline{P}$, then $\plan'\in\allplans\setminus P$. Assume to the contrary that $\costfunc(\plan') \leq q$. Then, from the second condition of Definition \ref{def:problem} for $P$, there exists a plan $\plan\in P$ such that $(\plan,\plan') \in \domrel$. But then, by the definition of $\overline{P}$ we have $\plan'\in\overline{P}$, giving us a contradiction. 
\end{proof}

We can now certify solutions to the dominance top-quality planning problem.

\begin{theorem}
    A set of plans $P$ can be 
    checked for being a solution
    to the dominance top-quality planning problem.
\end{theorem}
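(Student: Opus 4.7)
The plan is to verify each of the three conditions of Definition~\ref{def:problem} in turn, leveraging Theorem~\ref{th:solution} to translate the dominance problem into an ordinary top-quality problem and then applying the preceding theorem on certifying top-quality solutions.

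First, condition~\ref{en:cost} is checked directly by computing $\costfunc(\pi)$ for each $\pi \in P$ and verifying $\costfunc(\pi) \le q$. Next, for condition~\ref{en:representative}, I would form the extended set $\overline{P}$ from $P$ and $\domrel$, and then certify via the previous theorem that $\overline{P}$ is a solution to the top-quality planning problem for $\ptask$. If this certification succeeds, every plan of cost at most $q$ lies in $\overline{P}$, so every such plan outside $P$ is dominated by some element of $P$ under $\domrel$, which is exactly condition~\ref{en:representative}. Conversely, if both conditions~\ref{en:cost} and~\ref{en:representative} hold, Theorem~\ref{th:solution} guarantees that $\overline{P}$ is a top-quality solution, hence certifiable, so this check is both sound and complete.

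For condition~\ref{en:minimal}, minimality under $\subseteq$ reduces to showing, for each $\pi \in P$, that $P \setminus \{\pi\}$ fails either condition~\ref{en:cost} or condition~\ref{en:representative}. Since dropping a plan preserves condition~\ref{en:cost}, it suffices to exhibit, for each $\pi \in P$, a witness plan $\pi^{\star}$ with $\costfunc(\pi^{\star}) \le q$ such that $\pi^{\star} \notin P \setminus \{\pi\}$ and no element of $P \setminus \{\pi\}$ dominates $\pi^{\star}$ under $\domrel$. In most cases $\pi$ itself serves as this witness, and producing a single plan of bounded cost is cheap to verify. The existence of such a witness for every $\pi \in P$ certifies that no plan in $P$ is redundant, and thus $P$ is minimal under $\subseteq$.

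The main obstacle is that $\overline{P}$ is defined via all plans of cost at most $q$ standing in $\domrel$ to some element of $P$, a set that may in general be infinite or expensive to materialise. For the relations considered in this paper ($\emptyset$, $\unorderedrel$, $\subsetrel$, and $\looplessrel$), the plan-forbidding transformations cited from the literature, together with the new loopless transformation introduced later in the paper, provide a transformed task whose plan set is exactly $\allplans \setminus \overline{P}$, so the certification of $\overline{P}$ as a top-quality solution can be carried out without ever enumerating $\overline{P}$ explicitly; this is the step where the per-relation machinery of the paper is actually used.
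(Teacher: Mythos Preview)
Your proposal is correct and follows essentially the same approach as the paper: certify that $\overline{P}$ is a top-quality solution (handling conditions~\ref{en:cost} and~\ref{en:representative}), and then, for each $\pi\in P$, show that the extended set $\overline{P\setminus\{\pi\}}$ fails to be a top-quality solution (handling condition~\ref{en:minimal}). Your witness-based phrasing of the minimality step is exactly what ``$\overline{P\setminus\{\pi\}}$ is not a top-quality solution'' means operationally, and your closing paragraph about avoiding the explicit enumeration of $\overline{P}$ via the forbidding transformations anticipates precisely the discussion the paper places immediately after this theorem.

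Two small points worth tightening. First, you invoke Theorem~\ref{th:solution} in both directions, but as stated it only gives the forward implication; the converse you need (if $\overline{P}$ is a top-quality solution then conditions~\ref{en:cost} and~\ref{en:representative} hold for $P$) is true and easy, but it comes from the definition of $\overline{P}$ rather than from the theorem itself. Second, the remark that ``in most cases $\pi$ itself serves as this witness'' is informal: $\pi$ may well be dominated by another element of $P$ while $P$ is still minimal (because some other plan was dominated only by $\pi$), so the witness really must be left as an arbitrary plan to be produced, not assumed to be $\pi$.
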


\begin{proof}
We can certify $P$ in two steps.    
\begin{itemize}
    \item Certify $\overline{P}$ to be a solution to the top-quality planning problem, and
    \item For every $\pi\in P$, show that $\overline{P\setminus\{\pi\}}$ is not a solution to the top-quality planning problem.
\end{itemize}

The latter is needed for certifying the condition \ref{en:minimal} of Definition \ref{def:problem}. The first two conditions of Definition \ref{def:problem} imply that if $P$ is a solution, then so is its superset. Therefore, it is sufficient to ensure that removing any single plan prevents the extended set from being a top-quality solution.
\end{proof}

Given $P$, it can be impossible to compute $\overline{P}$ explicitly. One example is the $\looplessrel$ relation, where while the set of all loopless plans $\alllooplessplans$ is finite and therefore all solutions to the loopless top-quality planning problem are finite, the extended under $\looplessrel$ set can be infinite.
In what follows, we show how to efficiently certify dominance top-quality without explicitly computing $\overline{P}$.

\subsection{Efficient Computation}
In practice, when $\domrel$ is specified implicitly, like $\unorderedrel$ for {\em unordered top-quality}, $\subsetrel$ for {\em subset top-quality} or $\looplessrel$ for {\em loopless top-quality}, it may be computationally intensive or even infeasible to explicitly generate the extended set $\overline{P}$ from a solution $P$. 
In such cases, 
we can instead directly transform $\ptask$ into a task $\overline{\ptask_{P}}$ that forbids the entire set $\overline{P}$. 

Such transformations exist for the cases of $\unorderedrel$ \cite{katz-sohrabi-aaai2020} and $\subsetrel$ \cite{katz-sohrabi-icaps2022}. Similarly to the case of top-quality planning, we can now certify $P$ to be a solution to the unordered respectively subset top-quality planning by certifying optimality of the respective transformation. 
Such transformation however does not exist for loop-less top-quality planning. In what follows, we present a novel transformation for the $\looplessrel$ relation, allowing us to certify $P$ to be a solution to the loopless top-quality planning.

\subsubsection{Forbidding Plans With Loops}
Here we present a novel transformation 
for forbidding
plans with loops. 
Specifically, for a plan $\pi$, it forbids $\pi$ and all 
$\pi'$ such that $(\pi,\pi')\in\looplessrel$.

The idea is based on the transformation that forbids a single plan (Definition 3 by \cite{katz-et-al-icaps2018}), with an additional dimension added, ensuring that no action enters the already traversed states of $\pi$.
Each original action is extended with additional preconditions and effects, allowing to capture whether the execution (a) is following the prefix of the input plan $\pi$, (b) diverging from $\pi$, or, (c) has already diverged from $\pi$. Each of these is further split to sub-cases. Figure \ref{fig:vis} visualizes the various sub-cases in different colors. It shows three different areas of the state space, with the bottom one including the initial state, the top one including only sink states, and the middle one, where all the goal states are.  
In what follows, we will refer to the colors of the edges to simplify the explanation.
Further, in what follows, for simplicity, we use disjunctive preconditions, which can be compiled away by introducing multiple action instances.

    Let $\ptaskParSTD$ be a planning task and $\pi=\op_1\cdot\ldots\cdot\op_n$ be
    some plan for $\ptask$ traversing the states $s_0,
    \ldots,s_n$. For every $0\leq i\leq n$, let 
    $$\ops_i:= \{ \op\in\ops \mid \prv(\op)\cup\eff(\op)\subseteq s_i \}$$ 
    denote the actions that may lead to the state $s_i$. For an action $\op\in\ops_i$, let 
    $$\extracond_i(\op):= s_i \setminus (\prv(\op)\cup\eff(\op))$$ 
    denote the extra condition that would ensure the state achieved by applying $\op$ is exactly $s_i$. 
    We denote the negation of $\extracond_i(\op)$ by
    $$\overline{\extracond_i(\op)} := \bigvee_{\var\in\variables{\extracond_i(\op)}}\domain(\var)\setminus\{\extracond_i(\op)[\var]\}.$$
    The task $\looplesstask{\pi} =
    \langle\vars',\ops',\initstate',\goal',\costfunc'\rangle$ is defined as follows.
    \begin{itemize}
      \item $\vars' = \vars \cup \{\extrav^d, \extrav^e\} \cup \{\extrav_i \mid 0\leq i \leq n\}$, with
      $\extrav^d$ and $\extrav^e$ being binary variables and $\extrav_i$ being ternary variables,
    %
    \item $\initstate'[\var] = \initstate[\var]$ for all
    $\var\in\vars$, $\initstate'[\extrav^d] = F$, $\initstate'[\extrav^e] = F$, $\initstate'[\extrav_{0}] = C$,
    and $\initstate'[\extrav_i] = 0$ for all $i>0$, and
    \item $\goal'[\var] \!=\! \goal[\var]$ for all $\var\in\vars$ s.t. $\goal[\var]$
    defined, $\goal'[\extrav^d] = T$, and $\goal'[\extrav^e] = F$.
     \end{itemize}

\begin{figure}
    \centering
    \tikzset{
    state/.style={
        circle, draw,minimum size=0.1cm, scale=0.5
        },
}
    \begin{tikzpicture}
        \draw (-4.7, -0.9) -- (1,-0.9) -- (2,1) -- (-3.7,1) -- (-4.7, -0.9);
        \draw (-4.7, 1.2) -- (1,1.2) -- (2,3.1) -- (-3.7,3.1) -- (-4.7, 1.2);
        \draw (-4.7, 3.3) -- (1,3.3) -- (2,5.2) -- (-3.7,5.2) -- (-4.7, 3.3);

        \node[] (b) at (-5, 0.25) {$\fdrpair{\extrav^d}{F}$} ;
        \node[] (b) at (-5, -0.25) {$\fdrpair{\extrav^e}{F}$} ;

        \node[] (b) at (-5, 2.5) {$\fdrpair{\extrav^d}{T}$} ;
        \node[] (b) at (-5, 2.0) {$\fdrpair{\extrav^e}{F}$} ;
        \node[] (b) at (-5, 4.4) {$\fdrpair{\extrav^e}{T}$} ;

        \node[state] (s0) at (-3.9, 0) {} ;
        \node[] at (-3.4, 0) {{$\ldots$}} ;
        \node[state] (s1) at (-2.9, 0) {} ;
        \node[state] (sk) at (-2, 0) {} ;

        \node[state] (xsk1) at (-1.5, 1.8) {} ;
        \node[] at (-1.1, 1.8) {{$\ldots$}} ;
        \node[state] (xsj) at (-0.6, 1.8) {} ;
        \node[] at (-0.1, 1.8) {{$\ldots$}} ;
        \node[state] (xsn) at (0.4, 1.8) {} ;

        \node[state] (xso) at (-3, 2.5) {} ;

        \node[state] (xsn1) at (-2.2, 2.3) {} ;
        \node[state] (xs) at (-0.1, 2.5) {} ;
        \node[state] (xt1) at (-1.1, 2.6) {} ;
        \node[state] (xt2) at (1., 2.5) {} ;

        \node[state] (ye) at (-3, 4.4) {} ;
        \node[state] (ye2) at (0.5, 4.4) {} ;

        \draw[->,] (s1) -- (sk) node [midway,below] {$\planopfollow{\op_{k}}$};
        \draw[->,red] (sk.west) to [out=150,in=240] (ye.west) node [below=0.2cm] {$\planopdiscarde{\op_i}$};
        \draw[->,cyan] (sk.west) to [out=0,in=270] (xsk1.south) node [below=0.5cm, above right=-0.1cm] {$\planopdiscard{\op_{k+1}}$};
        \draw[->,magenta] (sk.east) to [out=0,in=270] (xsj.south) node [below=0.7cm] {$\planopdiscard{\op_j}$};
        \draw[->,violet] (sk) -- (xsn1) node [midway,below=-0.2cm, above left=0cm] {$\planopdiscard{\op'}$};
        \draw[->,blue] (sk.west) to [out=120,in=240] (xso.west) node [below=0.1cm] {$\planopdiscard{\op}$};

        \draw[->,red] (xs.north) to [out=90,in=240] (ye2.west) node [above] {$\planopdiscardede{\op_i}$};
        \draw[->,orange] (xs) -- (xt1) node [above,near end] {$\planopdiscarded{\op}$};
        \draw[->,purple] (xs) -- (xt2) node [above,near end] {$\planopdiscarded{\op'}$};
        \draw[->,green] (xs) -- (xsn) node [near end,above=-0.05cm] {$\planopdiscarded{\op_{i}}$};

    \end{tikzpicture}
    \caption{\label{fig:vis} Visualization of the transformation.}
  \end{figure}
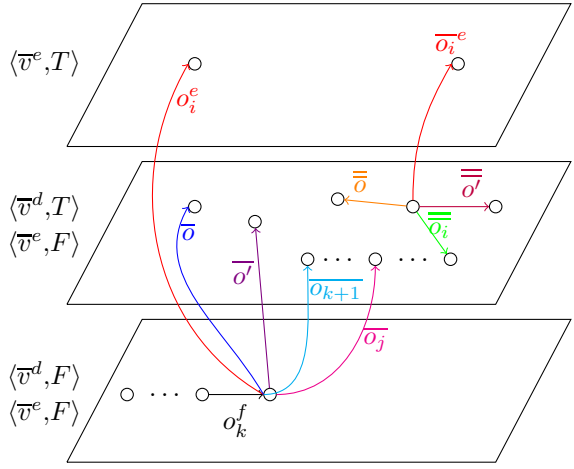

As mentioned before, the actions $\ops'$ extend the actions in $\ops$ by introducing additional preconditions and effects, with multiple copies covering the various cases. Specifically,
\[
\begin{split}
\planopfollow{\op_{i}} &= \langle        
\pre(\op_{i})\cup \{ \fdrpair{\extrav^e}{F}, \fdrpair{\extrav^d}{F}, \fdrpair{\extrav_{i-1}}{C} \}, \\
& \hspace*{2.1cm}\eff(\op_{i})\cup \{ \fdrpair{\extrav_{i-1}}{1}, \fdrpair{\extrav_i}{C} \}\rangle
\end{split}
\]
are the copies of the actions on $\pi$ (colored black in Figure \ref{fig:vis}) that follow the execution of the plan $\pi$, until a diverging from $\pi$ is happening. When that happens, there are five options. First, the first action $\op\neq\op_{k+1}$ diverging from $\pi$ (after $op_1\cdot\ldots\cdot\op_k$ were applied) is in $\ops_i$ for some $i < k$ and it reaches an already traversed state $s_i$, with an action copy (red)
\[
\begin{split}
\planopdiscarde{\op_i} &= \tuple{
\pre(\op)\cup \{ \fdrpair{\extrav^e}{F}, \fdrpair{\extrav^d}{F}, \fdrpair{\extrav_i}{1},  \fdrpair{\extrav_k}{C} \} \cup \extracond_i(\op)
,\\
& \hspace*{3cm} 
\eff(\op)\cup \{\fdrpair{\extrav^e}{T} \} }.
\end{split}
\]
Second, it reaches a not yet traversed state $s_j$ for $j > k+1$ with an action copy (magenta)
\[
\begin{split}
\planopdiscard{\op_j} &= \tuple{
\pre(\op)\cup \{ \fdrpair{\extrav^e}{F}, \fdrpair{\extrav^d}{F}, \fdrpair{\extrav_j}{0},  \fdrpair{\extrav_k}{C} \} \cup \extracond_j(\op)
,\\
& \hspace*{3cm} 
\eff(\op)\cup \{\fdrpair{\extrav^d}{T}, \fdrpair{\extrav_j}{1}, \fdrpair{\extrav_k}{1} \} }.
\end{split}
\]
Third, it reaches the state $s_{k+1}$ with an action copy (cyan)
\[
\begin{split}
\planopdiscard{\op_{k+1}} &= \tuple{
\pre(\op)\cup \{ \fdrpair{\extrav^e}{F}, \fdrpair{\extrav^d}{F}, \fdrpair{\extrav_k}{C} \} \cup \extracond_{k+1}(\op)
,\\
& \hspace*{3cm} 
\eff(\op)\cup \{\fdrpair{\extrav^d}{T}, \fdrpair{\extrav_k}{1}, \fdrpair{\extrav_{k+1}}{1} \} }.
\end{split}
\]
Forth, it reaches a different state with an action copy (blue)
\[
\begin{split}
\planopdiscard{\op} &= \tuple{
\pre(\op)\cup \{ \fdrpair{\extrav^e}{F}, \fdrpair{\extrav^d}{F}, \fdrpair{\extrav_k}{C} \} \cup \bigwedge_{i=0}^{k-1}\overline{\extracond_i(\op)}%
,\\
& \hspace*{3cm} 
\eff(\op)\cup \{\fdrpair{\extrav^d}{T} \} \cup \{ \fdrpair{\extrav_k}{1} \} }.
\end{split}
\]
Fifth and last, if the action $\op$ is not in any $\ops_i$, $0\leq i \leq n$, it reaches a different state with an action copy (violet)
\[
\begin{split}
\planopdiscard{\op'} &= \tuple{
\pre(\op)\cup \{ \fdrpair{\extrav^e}{F}, \fdrpair{\extrav^d}{F}, \fdrpair{\extrav_k}{C} \}%
,\\
& \hspace*{3cm} 
\eff(\op)\cup \{\fdrpair{\extrav^d}{T} \} \cup \{ \fdrpair{\extrav_k}{1} \} }.
\end{split}
\]
Once diverged from $\pi$, the action copies only need to record whether they reached one of the states $s_i$ more than once. This can be done with the following cases:
\[
\begin{split}
\planopdiscarded{\op_{i}} &= \tuple{
\pre(\op)\cup \{ \fdrpair{\extrav^e}{F}, \fdrpair{\extrav^d}{T}, \fdrpair{\extrav_i}{0} \} \cup \extracond_{i}(\op)
,\\
& \hspace*{3cm} 
\eff(\op)\cup \{\fdrpair{\extrav_i}{1} \} },
\end{split}
\]
for reaching $s_i$ for the first time (green), 
\[
\begin{split}
\planopdiscardede{\op_{i}} &= \tuple{
\pre(\op)\cup \{ \fdrpair{\extrav^e}{F}, \fdrpair{\extrav^d}{T}, \fdrpair{\extrav_i}{1} \} \cup \extracond_{i}(\op)
,\\
& \hspace*{3cm} 
\eff(\op)\cup \{\fdrpair{\extrav^e}{T} \} },
\end{split}
\]
for reaching $s_i$ for the second time (red), or 
$$
\planopdiscarded{\op} = \tuple{
\pre(\op)\cup \{ \fdrpair{\extrav^e}{F}, \fdrpair{\extrav^d}{T} \} \cup \bigwedge_{i=0}^{k-1}\overline{\extracond_i(\op)}%
,
 \hspace*{0.1cm} 
\eff(\op) },
$$
(colored orange)
for reaching a state not on $\pi$. Here as well, if the action $\op$ is not in any $\ops_i$, $0\leq i \leq n$, it reaches a different state with an action copy (purple)
$$\planopdiscarded{\op'} = \tuple{
\pre(\op)\cup \{ \fdrpair{\extrav^e}{F}, \fdrpair{\extrav^d}{T} \}%
,
\hspace*{0.1cm} 
\eff(\op) }.
$$
The costs of these copies is the same as the cost of the original action.
We say that $\looplesstask{\pi}$ is the loopless transformation of $\ptask$ under $\pi$. 

\begin{theorem}
    \label{th:transformation}
Let $\ptask$ be a planning task, $\pi$ be its loopless plan and $\looplesstask{\pi}$ be the loopless transformation under $\pi$. Then $\looplesstask{\pi}$ forbids exactly $\pi$ and all plans dominated by $\pi$.
\end{theorem}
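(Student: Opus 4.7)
The plan is to prove Theorem 4 by set equality between the plans of $\looplesstask{\pi}$ (projected to $\ops$) and the plans of $\ptask$ that are neither $\pi$ nor traverse any state of $S_\pi = \{s_0,\ldots,s_n\}$ more than once. Both inclusions rely on a projection/lift correspondence between the action copies of $\ops'$ and their base actions in $\ops$.

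For soundness, I would take any plan $\rho$ of $\looplesstask{\pi}$ and let $\pi'$ be the projection obtained by replacing each copy by its base action. Since every copy preserves the original preconditions and effects on $\vars$, a straightforward induction on the length of $\rho$ shows $\pi'$ is a valid plan of $\ptask$. The goal component $\goal'[\extrav^d] = T$ forces $\rho$ to contain at least one divergence copy, which witnesses $\pi' \neq \pi$. The invariant that $\extrav_i$ is set to $1$ the first time $s_i$ is reached (and never reset), combined with $\goal'[\extrav^e] = F$, rules out revisits: the only copies whose preconditions demand $\extrav_i = 1$ and whose result equals $s_i$ are the red copies $\planopdiscarde{\op_i}$ and $\planopdiscardede{\op_i}$, and both set $\extrav^e = T$.

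For completeness, I would take a plan $\pi'$ of $\ptask$ with $\pi' \neq \pi$ that visits no $s_i$ twice, and construct a lift $\rho$ by walking $\pi'$ one step at a time, choosing at each step the unique applicable copy based on the current values of $\extrav^d$ and the $\extrav_i$ and on the relation of the successor state to $S_\pi$. The invariants are maintained step by step; the assumption $\pi'\neq\pi$ guarantees a first divergence that sets $\extrav^d = T$, and the no-revisit assumption keeps $\extrav^e = F$, so $\rho$ reaches $\goal'$.

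The main obstacle is the case analysis identifying the correct copy at every step. At divergence the successor is one of: an already-traversed $s_i$ with $i<k$ (red), the next state $s_{k+1}$ (cyan), an untraversed $s_j$ with $j>k+1$ (magenta), or an off-trajectory state (blue if $\op\in\bigcup_i\ops_i$, violet otherwise); after divergence it is either some $s_i$ for the first time (green), some $s_i$ for the second time (red), or off-trajectory (orange or purple). One has to verify that in each sub-case exactly one copy is applicable, that the running invariants satisfy its preconditions, and that its effects update the bookkeeping correctly. Extra care is required for plans that follow $\pi$ in full and then extend with an off-trajectory suffix, where divergence must be triggered from $s_n$ itself, and for disambiguating actions $\op\in\bigcup_i\ops_i$ from those outside via the $\extracond$ and $\overline{\extracond}$ conditions.
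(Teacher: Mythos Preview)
Your proposal is correct and follows essentially the same approach as the paper: both arguments use the projection from action copies to base actions (the paper names it $r$) to show soundness, and the inverse lift via the uniquely applicable copy at each step to show completeness, with $\extrav^d$ witnessing divergence from $\pi$ and $\extrav^e$ witnessing a revisit of some $s_i$. Your write-up is somewhat more explicit about the invariants and the per-step case analysis than the paper's proof, but the underlying decomposition and the key ideas are the same.
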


\begin{proof}
    The proof is similar to the proof of Theorem 2 of \inlinecite{katz-et-al-icaps2018}. 
    Let $r:\ops'\mapsto\ops$ be the mapping of action copies back to the original action: $r(\planopfollow{\op_{i}}) = r(\planopdiscard{\op_i}) = r(\planopdiscarde{\op_i})=r(\planopdiscard{\op'}) = r(\planopdiscard{\op}) = r(\planopdiscardede{\op_i}) = r(\planopdiscarded{\op}) = r(\planopdiscarded{\op'}) = r(\planopdiscarded{\op_{i}}) = \op$.
    Note that $\looplesstask{\plan}$ restricted to the variables $\vars$ equals to the
    task $\ptask$, modulo the copies of the actions split into the cases of reaching one of the states $s_i$ or none of them. Thus,
    for each plan $\plan'$ for $\looplesstask{\plan}$, $r(\plan')$ is a plan for $\ptask$.

    For the other direction, for each $\op\in\plan$ at most one of the action copies is applicable in each state. So, every applicable in $\initstate$ sequence of actions $\rho$ in $\ptask$ can be uniquely mapped into a sequence $r^{-1}(\rho)$ of $\looplesstask{\plan}$. Observe that $\plan=\tuple{\op_1\ldots\op_n}$ is mapped to  $\tuple{\planopfollow{\op_{1}}\ldots\planopfollow{\op_{n}}}$, which leads to a state where $\fdrpair{\extrav^d}{F}$ and therefore not a goal state.
    Looking now at a plan $\plan'$ such that $(\plan,\plan')\in\looplessrel$, reaching some state $s_i$ on $\plan$ at least twice. When the action that reaches $s_i$ for the second time (either $\planopdiscarde{\op_i}$ or $\planopdiscardede{\op_{i}}$) is applied in $\looplesstask{\plan}$, the value $\fdrpair{\extrav^e}{T}$ is reached and cannot be changed by any action. Therefore the corresponding sequence of actions is not a plan for $\looplesstask{\plan}$.

    Let $\plan'\neq\plan$ some plan for $\ptask$ such that $(\plan,\plan')\not\in\looplessrel$ and let $\rho$ and $\rho'$ be the corresponding applicable sequences of actions in $\looplesstask{\plan}$.     
    Let $\op$ be the first action where  $\plan'$ diverges from $\plan$. Then, the corresponding to $\op$ action in $\rho'$ is one of $\{\planopdiscard{\op_{i}}, \planopdiscard{\op}, \planopdiscard{\op'}, \planopdiscarded{\op_{i}}, \planopdiscarded{\op}, \planopdiscarded{\op'}\}$ and achieves $\fdrpair{\extrav^d}{T}$. The value of $\extrav^d$ is never changed anymore, since there are no actions that achieve $\fdrpair{\extrav^d}{F}$. Since $\plan'$ does not reach any of the states $s_0,\ldots,s_n$ more than once, none of the copies of the actions in $\rho'$ are $\planopdiscarde{\op_i}$ or $\planopdiscardede{\op_i}$, which are the only copies that achieve $\fdrpair{\extrav^e}{T}$. Therefore, at the end of the execution of $\rho'$ we have $\fdrpair{\extrav^e}{F}$. Since all the effects on the original variables are preserved precisely, we get that $\rho'$ achieves a goal state and therefore a plan.
\end{proof}

\section{Certificate of Top-k}
We can also certify solutions for top-k planning problem.

\begin{theorem}
    A set of plans $P$ can be 
    checked for being a solution
    to the top-k planning problem.
\end{theorem}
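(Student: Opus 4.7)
The plan is to follow the template of the two preceding certification theorems, using the plan-forbidding transformation $\ptask_{P}$ of \inlinecite{katz-et-al-icaps2018}, whose plan set is exactly $\allplans\setminus P$, together with the already-available certificates of optimality \cite{mugdan-et-al-icaps2023} and unsolvability \cite{eriksson-et-al-icaps2017}.

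First I would verify by direct simulation that every $\pi\in P$ is indeed a plan of $\ptask$, and check the size bound $|P|\leq k$. If $P$ is non-empty, let $c^{\star}:=\max_{\pi\in P}\costfunc(\pi)$. Condition~(I) of top-$k$ planning asserts that every plan strictly cheaper than some plan of $P$ must itself lie in $P$; equivalently, every plan in $\allplans\setminus P$ has cost at least $c^{\star}$. Since the plans of $\ptask_{P}$ are exactly $\allplans\setminus P$, this reduces to certifying that the optimal cost of $\ptask_{P}$ is at least $c^{\star}$, which is discharged by applying the optimality certificate of \inlinecite{mugdan-et-al-icaps2023} to $\ptask_{P}$.

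Second, condition~(II) consists of the already-checked bound $|P|\leq k$ together with the implication $|P|<k \Rightarrow P=\allplans$. When $|P|<k$, I would additionally certify that $\ptask_{P}$ is unsolvable; by construction this is equivalent to $\allplans\setminus P=\emptyset$, i.e. $P=\allplans$. The edge case $P=\emptyset$ is handled uniformly, since $\ptask_{\emptyset}=\ptask$ and unsolvability of the original task suffices.

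The main subtlety I foresee is the regime split: when $|P|=k$ only the optimality certificate on $\ptask_{P}$ is needed, whereas when $|P|<k$ the stronger unsolvability certificate is required to rule out any further plan regardless of cost. Beyond correctly selecting the right certificate in each case, I do not anticipate a genuine obstacle, since the argument reuses the plan-forbidding transformation and the existing certificate technology in the same pattern as the top-quality certification theorem.
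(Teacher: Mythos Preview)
Your argument is correct, but the route differs from the paper's. You forbid the \emph{entire} set $P$, reduce condition~(I) to the lower-bound statement ``the optimal cost of $\ptask_{P}$ is at least $c^{\star}$,'' and handle condition~(II) by an unsolvability certificate when $|P|<k$. The paper instead partitions $P$ by cost into $P_1=\{\pi\in P\mid \costfunc(\pi)<m\}$ and $P_2=\{\pi\in P\mid \costfunc(\pi)=m\}$ with $m=c^{\star}$, forbids only $P_1$, and certifies (i) that $P_1$ is a top-quality solution for $q=\max_{\pi\in P_1}\costfunc(\pi)$ (reusing the earlier theorem) and (ii) that $m$ is optimal for $\ptask_{P_1}$. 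The advantage of the paper's decomposition is that a witness plan of cost exactly $m$ is guaranteed to survive in $\ptask_{P_1}$ (any plan in $P_2$), so the optimality certificate of \inlinecite{mugdan-et-al-icaps2023} applies in its standard form. In your approach, $\ptask_{P}$ need not contain a plan of cost $c^{\star}$ (e.g., when $P$ already contains all plans of cost $\leq c^{\star}$), so you are really invoking only the lower-bound half of the optimality certificate, or else certifying optimality of some plan of cost strictly greater than $c^{\star}$ and then comparing; this is fine but worth stating explicitly. Your version is more uniform and avoids the two-step decomposition; the paper's version makes the appeal to existing certificates slightly cleaner and explicitly reuses the top-quality certification result.
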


\begin{proof}
First, if $|P|<k$, we certify unsolvability of $\ptask_{P}$. Otherwise, let $m=\max_{\pi\in P}\costfunc(\pi)$ and $P_1:=\{\pi\in P \mid \costfunc(\pi) < m\}$ and $P_2:=\{\pi\in P \mid \costfunc(\pi) = m\}$ be the partition of $P$ into the set of plans of maximal cost and the set of plans of lower than maximal cost. 
If $P_1$ is empty, then all plans in $P$ are of the same cost $m$, and we can certify top-k planning by certifying optimality of m.
Now, assume that $P_1$ is not empty and let $q=\max_{\pi\in P_{1}}\costfunc(\pi)$ be the maximal cost in $P_1$. 
Observing that $P_1$ must be a solution for the top-quality planning problem for $q$, we can certify top-k planning solution in two steps:
(I) certify $P_1$ to be a solution to the top-quality planning problem, and (II) certify optimality of $m$ for $\ptask_{P_1}$.
\end{proof}

\section{Discussion and Future Work}

We present a definition that unifies existing computational problems under the umbrella of top-quality planning. Based on the definition, we show that certification of top-quality can be obtained with the help of task transformations and certification of optimality. We further show that existing task transformations can be used for efficient certification of unordered and subset top-quality planning problems. Finally, we present a novel transformation and use it for certifying loopless top-quality planning.

In the future, we would like to certify planning algorithms for the various top-quality problems. While the ForbidIterative planners 
are relatively straightforward to certify, other methods, such as $\kstar$ search   
or symbolic search 
are more challenging to certify. Same can be said for search pruning techniques used in top-quality planners, such as symmetry reduction 
\cite{katz-lee-ijcai2023} 
or partial order reduction.
\cite{katz-lee-socs2023}.
Another interesting direction for future work is to use the suggested in this work transformation for loopless top-quality planning, e.g., as part of the ForbidIterative framework. While the transformation significantly increases the number of actions, it might still pay off in several domains. 

\fontsize{10pt}{11pt}\selectfont

\end{document}